\documentclass{article}

\setlength{\parindent}{0pt}

\setlength{\parskip}{0.5em}

\usepackage[utf8]{inputenc}
\usepackage[T1]{fontenc}
\usepackage{textcomp}
\usepackage{bm}

\usepackage{amsmath}
\usepackage{amssymb}
\usepackage{amsthm}
\usepackage{amsfonts}       

\usepackage{mathtools}
\DeclarePairedDelimiter{\prn}{(}{)}
\DeclarePairedDelimiter{\set}{\{}{\}}
\DeclarePairedDelimiterX{\Set}[2]{\{}{\}}{\,{#1}\;\delimsize|\;{#2}\,}
\DeclarePairedDelimiter{\abs}{|}{|}
\DeclarePairedDelimiter{\norm}{\|}{\|}

\DeclarePairedDelimiter{\brc}{[}{]}
\DeclarePairedDelimiterX{\Brc}[2]{[}{]}{\,{#1}\;\delimsize|\;{#2}\,}

\DeclarePairedDelimiter{\floor}{\lfloor}{\rfloor}

\usepackage[svgnames]{xcolor}
\usepackage{graphicx}

\usepackage{booktabs}
\usepackage{multirow}

\usepackage{etoolbox}
\cslet{blx@noerroretextools}\empty%
\usepackage[date=year,eprint=false,doi=false,isbn=false,maxcitenames=2,natbib,url=false,sorting=nyt,style=trad-abbrv,backref=true]{biblatex}

\DeclareSourcemap{
    \maps[datatype=bibtex]{
        \map{
            \step[fieldset=editor, null]
            \step[fieldset=series, null]
            \step[fieldset=language, null]
            \step[fieldset=address, null]
            \step[fieldset=location, null]
            \step[fieldset=month, null]
        }
    }
}
\addbibresource{references.bib}

\usepackage{algorithm}

\usepackage[colorlinks=true,citecolor=Navy,linkcolor=Maroon,urlcolor=Orchid,bookmarksnumbered,hypertexnames=false,pdfdisplaydoctitle,pdfusetitle,unicode]{hyperref}

\usepackage[noend]{algpseudocode}

\algnewcommand{\algorithmicinput}{\textbf{Input:}}
\algnewcommand{\Input}{\item[\algorithmicinput]}
\algnewcommand{\algorithmicoutput}{\textbf{Input:}}
\algnewcommand{\Output}{\item[\algorithmicoutput]}
\algnewcommand{\Break}{\textbf{break}}

\usepackage{url}

\usepackage{upref}
\usepackage[capitalize,noabbrev]{cleveref}

\crefname{step}{Step}{Steps}
\Crefname{step}{Step}{Steps}

\usepackage{autonum}

\usepackage{nicefrac}       
\usepackage{microtype}      
\usepackage{xspace}
\usepackage[color={red!100!green!33},colorinlistoftodos,prependcaption,textsize=small]{todonotes}
\usepackage{subcaption}
\usepackage{wrapfig}
\usepackage{siunitx}

\usepackage{fullpage}
\usepackage{newtxtext,newtxmath}

\newtheorem{theorem}{Theorem}[section]
\newtheorem{lemma}[theorem]{Lemma}

\newtheorem{assumption}[theorem]{Assumption}
\theoremstyle{definition}
\newtheorem{definition}[theorem]{Definition}

\newtheorem{remark}[theorem]{Remark}

\newcommand{\R}{\mathbb{R}}

\newcommand{\zeros}{\mathbf{0}}
\newcommand{\ones}{\mathbf{1}}

\DeclareMathOperator{\argmin}{arg\,min}
\DeclareMathOperator{\argmax}{arg\,max}

\newcommand{\Ord}{\mathrm{O}}

\newcommand{\A}{\bm{A}}
\newcommand{\B}{\bm{B}}
\renewcommand{\P}{\bm{P}}
\newcommand{\Q}{\bm{Q}}
\newcommand{\V}{\bm{V}}
\newcommand{\X}{\bm{X}}

\let\origc\c
\renewcommand{\b}{\bm{b}}
\renewcommand{\c}{\bm{c}}

\newcommand{\w}{\bm{w}}
\newcommand{\x}{\bm{x}}
\newcommand{\y}{\bm{y}}
\newcommand{\z}{\bm{z}}

\newcommand{\lamb}{\bm{\lambda}}
\newcommand{\thb}{\bm{\theta}}

\newcommand{\itt}{r}
\newcommand{\jtt}{s}
\newcommand{\e}{\mathbf{e}}
\newcommand{\I}{\mathbf{I}}

\newcommand{\Dcal}{\mathcal{D}}
\newcommand{\Ucal}{\mathcal{U}}
\newcommand{\pdim}{\mathrm{pdim}}
\newcommand{\E}{\mathop{\mathbb{E}}}

\newcommand{\Full}{\textsf{Full}\xspace}
\newcommand{\ColRand}{\textsf{ColRand}\xspace}
\newcommand{\PCA}{\textsf{PCA}\xspace}
\newcommand{\SGA}{\textsf{SGA}\xspace}

\title{Generalization Bound and Learning Methods for \\Data-Driven Projections in Linear Programming}

\author{%
Shinsaku Sakaue\\
The University of Tokyo\\
Tokyo, Japan\\
\href{mailto:sakaue@mist.i.u-tokyo.ac.jp}{sakaue@mist.i.u-tokyo.ac.jp}
\and
Taihei Oki\\
The University of Tokyo\\
Tokyo, Japan\\
\href{mailto:oki@mist.i.u-tokyo.ac.jp}{oki@mist.i.u-tokyo.ac.jp}
}
\date{}

\begin{document}

\maketitle

\begin{abstract}  
  How to solve high-dimensional linear programs (LPs) efficiently is a fundamental question.
  Recently, there has been a surge of interest in reducing LP sizes using \textit{random projections}, which can accelerate solving LPs independently of improving LP solvers. 
  This paper explores a new direction of \emph{data-driven projections}, which use projection matrices learned from data instead of random projection matrices.
  Given training data of $n$-dimensional LPs, we learn an $n\times k$ projection matrix with $n > k$. 
  When addressing a future LP instance, we reduce its dimensionality from $n$ to $k$ via the learned projection matrix, solve the resulting LP to obtain a $k$-dimensional solution, and apply the learned matrix to it to recover an $n$-dimensional solution.
  On the theoretical side, a natural question is: how much data is sufficient to ensure the quality of recovered solutions? We address this question based on the framework of \textit{data-driven algorithm design}, which connects the amount of data sufficient for establishing generalization bounds to the \textit{pseudo-dimension} of performance metrics. We obtain an $\tilde{\mathrm{O}}(nk^2)$ upper bound on the pseudo-dimension, where $\tilde{\mathrm{O}}$ compresses logarithmic factors. We also provide an $\Omega(nk)$ lower bound, implying our result is tight up to an $\tilde{\mathrm{O}}(k)$ factor. 
  On the practical side, we explore two simple methods for learning projection matrices: PCA- and gradient-based methods. While the former is relatively efficient, the latter can sometimes achieve better solution quality. Experiments demonstrate that learning projection matrices from data is indeed beneficial: it leads to significantly higher solution quality than the existing random projection while greatly reducing the time for solving LPs.
\end{abstract}

\section{Introduction}\label{sec:introduction}
Linear programming (LP) has been one of the most fundamental tools used in various industrial domains \citep{Gass1985-dz,Eiselt2007-an}, and how to address high-dimensional LPs efficiently has been a major research subject in operations research.
To date, researchers have developed various fast LP solvers, most of which stem from the simplex or interior-point method. 
Recent advances include a parallelized simplex method~\citep{Huangfu2018-ng} and a randomized interior-point method~\citep{Chowdhury2022-xy}.
%
Besides the improvements in LP solvers, there has been a growing interest in reducing LP sizes via \textit{random projections} \citep{Vu2018-ty,Poirion2023-bm,Akchen2024-gg}, motivated by the success of random sketching in numerical linear algebra \citep{Woodruff2014-ya}. 
Such a projection-based approach is \textit{solver-agnostic} in that it can work with any solvers, including the aforementioned recent solvers, for solving reduced-size LPs.
This solver-agnostic nature is beneficial, especially considering that LP solvers have evolved in distinct directions of simplex and interior-point methods. 

In the context of numerical linear algebra, there has been a notable shift towards learning sketching matrices from data, instead of using random matrices \citep{Indyk2019-cn,Indyk2021-yn,Bartlett2022-mu,Li2023-sp,Sakaue2023-ta}. 
This data-driven approach is effective when we frequently address similar instances.
The line of previous research has demonstrated that learned sketching matrices can greatly improve the performance of sketching-based methods.

\subsection{Our contribution}
Drawing inspiration from this background, we study a \textit{data-driven projection} approach for accelerating repetitive solving of similar LP instances, which often arise in practice \citep{Fan2023-gr} (see also \cref{rem:validity}). 
Our approach inherits the solver-agnostic nature of random projections for LPs, and it can improve solution quality by learning projection matrices from past LP instances. 
Our contribution is a cohesive study of this data-driven approach to LPs from both theoretical and practical perspectives, as follows.

\textbf{Generalization bound.}
We first formulate the task of learning projection matrices as a statistical learning problem and study the generalization bound. 
Specifically, we analyze the number of LP-instance samples sufficient for bounding the gap between the empirical and expected objective values attained by the data-driven projection approach. 
Such a generalization bound is known to depend on the \textit{pseudo-dimension} of the class of performance metrics. 
We prove an $\tilde{\mathrm{O}}(nk^2)$ upper bound on the pseudo-dimension (\cref{theorem:pdim-upper-bound}), where $n$ and $k$ are the original and reduced dimensionalities, respectively, and $\tilde{\mathrm{O}}$ compresses logarithmic factors. 
A main technical non-triviality lies in \cref{lem:lp-polynomial}, which elucidates a piecewise polynomial structure of the optimal value of LPs as a function of input parameters.
Besides playing a key role in proving \cref{theorem:pdim-upper-bound}, \cref{lem:lp-polynomial} offers general insight into the optimal value of LPs, which could have broader implications.
We also give an $\Omega(nk)$ lower bound on the pseudo-dimension (\cref{theorem:pdim-lower-bound}). 
As experiments demonstrate later, we can get high-quality solutions with $k$ much smaller than $n$, suggesting our result, with only an $\tilde{\mathrm{O}}(k)$ gap, is nearly tight.

\textbf{Learning methods.} We then explore how to learn projection matrices in \cref{sec:learning-methods}.
We consider two simple learning methods based on principal component analysis (PCA) and gradient updates.
The former efficiently constructs a projection matrix by extracting the top-$k$ subspace around which optimal solutions of future instances are expected to appear. 
The latter, although more costly, directly improves the optimal value of LPs via gradient ascent.
In \cref{section:experiment}, experiments on various datasets confirm that projection matrices learned by the PCA- and gradient-based methods can lead to much higher solution quality than random projection \citep{Akchen2024-gg}, while greatly reducing the time for solving LPs.

\subsection{Related work}
\textbf{Random projections for LPs.}
\citet{Vu2018-ty} introduced a random-projection method to reduce the number of equality constraints, and \citet{Poirion2023-bm} extended it to inequality constraints. 
As discussed therein, reducing the number of inequality constraints of LPs corresponds to reducing the dimensionality (the number of variables) of dual LPs. 
Recently, \citet{Akchen2024-gg} developed a column-randomized method for reducing the dimensionality of LPs. 
While these studies provide high probability guarantees, we focus on data-driven projections and discuss generalization bounds.

\textbf{Data-driven algorithm design.}
\textit{Data-driven algorithm design} \citep{Balcan2021-fy}, initiated by \citet{Gupta2017-ng}, has served as a foundational framework for analyzing generalization bounds of various data-driven algorithms~\citep{Balcan2021-jv,Balcan2022-em,Balcan2022-yp,Bartlett2022-mu,Sakaue2022-bb,Balcan2023-dn}.
Our statistical learning formulation in \cref{subsec:data-driven-projection} and a general proof idea in \cref{sec:generalization-bound} are based on it. 
Among the line of studies, the analysis technique for data-driven integer-programming (IP) methods~\citep{Balcan2022-em,Balcan2022-yp} is close to ours. 
The difference is that while their technique is intended for analyzing IP methods (particularly, branch-and-cut methods), we focus on LPs and discuss a general property of the optimal value viewed as a function of input parameters. 
Thus, our analysis is independent of solution methods, unlike the previous studies. 
This aspect is crucial for analyzing our solver-agnostic approach. 
Some studies have also combined LP/IP methods with machine learning \citep{Berthold2021-wk,Fan2023-gr,Sun2023-yc}, while learning of projection matrices has yet to be studied.

\textbf{Learning through optimization.}
Our work is also related to the broad stream of research on learning through optimization procedures~\citep{Amos2017-sf,Wilder2019-yw,Agrawal2019-sv,Berthet2020-wm,Tan2020-lp,Meng2021-yb,Amos2023-kw,Elmachtoub2022-og,Wang2020-wt,El_Balghiti2023-xe}, which we discuss in \cref{app:related-work}.

\textbf{Notation.}
For a positive integer $n$, let $\I_{n}$ and $\zeros_{n}$ be the $n\times n$ identity matrix and the $n$-dimensional all-zero vector, respectively, where we omit the subscript when it is clear from the context. 
For two matrices $\A$ and $\B$ with the same number of rows (columns), $[\A, \B]$ ($[\A; \B]$) denotes the matrix obtained by horizontally (vertically) concatenating $\A$ and $\B$.

\section{Reducing dimensionality of LPs via projection}\label{sec:projection-for-lps}
We overview the projection-based approach for reducing the dimensionality of LPs~\citep{Poirion2023-bm,Akchen2024-gg}. 
For ease of dealing with feasibility issues, we focus on the following inequality-form LP with input parameters $\c\in\R^n$, $\A\in\R^{m\times n}$, and $\b\in\R^m$: 
\begin{equation}
  \begin{aligned}\label{prob:lp}
    {\text{maximize}}_{\x\in\R^n} \quad \c^\top \x 
    && &&  
    \mathop{\text{subject to}}\quad \A\x \le \b. 
  \end{aligned}
\end{equation}
When $n$ is large, restricting variables to a low-dimensional subspace can be helpful for computing an approximate solution to \eqref{prob:lp} quickly.
Specifically, given a \textit{projection matrix} $\P\in\R^{n\times k}$ with $n > k$, we consider solving the following \textit{projected LP}, instead of \eqref{prob:lp}: 
\begin{equation}
  \begin{aligned}\label{prob:reduced-lp}
    {\text{maximize}}_{\y\in\R^k} \quad \c^\top \P\y 
    && &&  
    \mathop{\text{subject to}}\quad \A\P\y \le \b.  
  \end{aligned}
\end{equation}
Once we get an optimal solution $\y^*$ to the projected LP \eqref{prob:reduced-lp}, we can recover an $n$-dimensional solution, $\tilde{\x} = \P\y^*$, to the original LP~\eqref{prob:lp}. 
Note that the recovered solution is always feasible for \eqref{prob:lp}, although not always optimal. 
We measure the solution quality with the objective value $\c^\top\tilde{\x} = \c^\top \P\y^*$.
Ideally, if $\P$'s columns span a linear subspace that contains an optimal solution to \eqref{prob:lp}, the recovered solution $\tilde{\x} = \P\y^*$ is optimal to~\eqref{prob:lp} due to the optimality of $\y^*$ to \eqref{prob:reduced-lp}. 
Therefore, if we find such a good $\P$ close to being ideal with small $k$, we can efficiently obtain a high-quality solution $\tilde{\x} = \P\y^*$ to \eqref{prob:lp} by solving the smaller projected LP \eqref{prob:reduced-lp}.

\begin{remark}[Solver-specific aspects]\label{rem:solver-specific}
  As mentioned in \cref{sec:introduction}, this projection-based approach is solver-agnostic in that we can apply any LP solver to projected LPs~\eqref{prob:reduced-lp}. 
  To preserve this nature, we focus on designing projection matrices and do not delve into solver-specific discussions. 
  Experiments in \cref{section:experiment} will use Gurobi as a fixed LP solver, which is a standard choice. 
  Strictly speaking, projections alter the sparsity and numerical stability of projected LPs, which can affect the performance of solvers. 
  This point can be important, especially when original LPs are sparse and solvers exploit the sparsity. 
  Investigating how to take such solver-specific aspects into account is left for future work.
\end{remark}

\section{Data-driven projection}\label{subsec:data-driven-projection}
While the previous studies \citep{Vu2018-ty,Poirion2023-bm,Akchen2024-gg} have reduced LP sizes via random projections, we may be able to improve solution quality by learning projection matrices from data. 
We formalize this idea as a statistical learning problem.
Let $\Pi$ denote the set of all possible LP instances and $\Dcal$ an unknown distribution on $\Pi$. 
Given LP instances sampled from $\Dcal$, our goal is to learn $\P$ that maximizes the expected optimal value of projected LPs over $\Dcal$. 
Below, we assume the following three conditions.
\begin{assumption}\label{assump:feasible_bounded}
  (i) Every $\pi \in \Pi$ takes the inequality form \eqref{prob:lp}, 
  (ii) $\x = \zeros_n$ is feasible for all $\pi \in \Pi$, and 
  (iii) optimal values of all instances in $\Pi$ are upper bounded by a finite constant $H > 0$. 
\end{assumption}
Although \Cref{assump:feasible_bounded} narrows the class of LPs we can handle, it is not as restrictive as it seems. 
Suppose for example that LP instances in $\Pi$ have identical equality constraints. 
While such LPs in their current form do not satisfy~(i), we can convert them into the inequality form~\eqref{prob:lp} by considering the null space of the equality constraints (see \cref{app:remove-eq} for details), hence satisfying~(i). 
This conversion is useful for dealing with LPs of maximum-flow and minimum-cost-flow problems on a fixed graph topology, where we can remove the flow-conservation equality constraints by considering the null space of the incidence matrix of the graph. 
Regarding condition~(ii), we may instead assume that there exists an arbitrary common feasible solution $\x_0$ without loss of generality. 
This is because we can translate the feasible region so that $\x_0$ coincides with the origin $\zeros_n$. 
Condition~(ii) also implies that for any $\P\in\R^{n\times k}$, projected LPs are \emph{feasible} (i.e., their feasible regions are non-empty) since $\y = \zeros_k$ is always feasible for any projected LPs.
Condition~(iii) is satisfied simply by focusing on \emph{bounded} LPs (i.e., LPs with finite optimal values) and setting $H$ to the largest possible optimal value.
Conditions~(ii) and~(iii) also ensure that the optimal value of projected LPs always lies in $[0, H]$, which is used to derive a generalization bound in \cref{sec:generalization-bound}.
In \cref{section:experiment}, we see that many problems, including packing and network flow problems, can be written as LPs satisfying \cref{assump:feasible_bounded}. 

Due to condition (i), we can identify each LP instance $\pi \in \Pi$ with its input parameters $(\c, \A, \b)$ in~\eqref{prob:lp}, i.e., $\pi = (\c, \A, \b)$.
For an LP instance $\pi\in\Pi$ and a projection matrix $\P \in \R^{n\times k}$, we define 
\begin{equation}\label{eq:u-definition}
  u(\P, \pi) = \max\Set{\c^\top\P\y}{\A\P\y\le \b}
\end{equation}
as the optimal value of the projected LP. 
Our goal is to learn $\P\in\R^{n\times k}$ from LP instances sampled from $\Dcal$ to maximize the expected optimal value on future instances, i.e., $\E_{\pi\sim\Dcal}[u(\P, \pi)]$.

\begin{remark}[Validity of the setting]\label{rem:validity}
  The above statistical learning setting regarding LP instances is not an artifact.
  As \citet{Fan2023-gr} discussed, LPs often serve as descriptive models, and each instance can be viewed as a realization of input parameters following some distribution. 
  Such a scenario arises in, for example, daily production planning and flight scheduling. 
  Note that the statistical learning setting is also widely used as a foundational framework in data-driven algorithm design \citep{Gupta2017-ng,Balcan2021-jv,Bartlett2022-mu,Balcan2023-dn}. 
\end{remark}

\section{Generalization bound}\label{sec:generalization-bound}
This section studies the generalization bound, namely, how many samples from $\Dcal$ are sufficient for guaranteeing that the expected optimal value, $\E_{\pi\sim\Dcal}[u(\P, \pi)]$, of learned $\P$ is close to the empirical optimal value on sampled instances. 
First, let us overview the basics of learning theory. 
Let $\Ucal\subseteq\R^\Pi$ be a class of functions, where each $u\in\Ucal$ takes some input $\pi\in\Pi$ and returns a real value. 
We use the following \textit{pseudo-dimension}~\citep{Pollard1984-zp} to measure the complexity of a class of real-valued functions.
\begin{definition}\label{def:pdim}
  Let $N$ be a positive integer.
  We say $\Ucal\subseteq\R^\Pi$ \textit{shatters} an input set, $\set{\pi_1,\dots,\pi_N}\subseteq \Pi$, if there exist threshold values, $t_1,\dots,t_N \in \R$, such that each of all the $2^N$ outcomes of $\Set{u(\pi_i) \ge t_i}{i=1,\dots,N}$  is realized by some $u\in\Ucal$.
  The \textit{pseudo-dimension} of $\Ucal$, denoted by $\pdim(\Ucal)$, is the maximum size of an input set that $\Ucal$ can shatter.
\end{definition}

In our case, the set $\Ucal$ consists of functions $u(\P, \cdot):\Pi\to\R$, defined in \eqref{eq:u-definition}, for all possible projection matrices $\P\in\R^{n\times k}$. 
Each $u(\P, \cdot) \in \Ucal$ takes an LP instance $\pi = (\c, \A, \b) \in \Pi$ as input and returns the optimal value of the projected LP. 
\Cref{assump:feasible_bounded} ensures that the range of $u(\P, \cdot)$ is bounded by $[0, H]$ for all $\P\in\R^{n\times k}$.
Thus, the well-known uniform convergence result (see, e.g., \citet[Theorems~17.7~and~18.4]{Anthony2009-mm})
implies that for any distribution $\Dcal$ on $\Pi$, $\varepsilon>0$, and $\delta\in(0,1)$, if $N = \Omega((H/\varepsilon)^{2}(\pdim(\Ucal)\cdot\log(H/\varepsilon) + \log(1/\delta)))$ instances drawn i.i.d.\ from $\Dcal$ are given, with probability at least $1-\delta$, for all $\P\in\R^{n\times k}$, it holds that
\begin{equation}\label{eq:generalization-bound} 
  \textstyle
  \abs*{\frac{1}{N}
  \sum_{i=1}^N u(\P, \pi_i) - \E_{\pi\sim\Dcal}\brc*{u(\P, \pi)}} 
  \le \varepsilon.
\end{equation}
That is, if a projection matrix $\P$ produces high-quality solutions on $N \approx (H/\varepsilon)^2 \cdot \pdim(\Ucal)$ instances sampled i.i.d.\ from $\Dcal$, it likely yields high-quality solutions on future instances from $\Dcal$ as well.
Thus, analyzing $\pdim(\Ucal)$ of $\Ucal = \Set{u(\P, \cdot):\Pi\to\R}{\P \in \R^{n\times k}}$ reveals the sufficient sample size.

\begin{remark}[Importance of uniform convergence]\label{rem:uniform-bound}
  While the above generalization bound is not the sole focus of learning theory, it is particularly valuable in data-driven algorithm design, as is also discussed in the literature \citep{Balcan2021-jv,Balcan2022-em}.
  Note that \eqref{eq:generalization-bound} holds uniformly for all $\P \in \R^{n\times k}$, offering performance guarantees \emph{regardless of how $\P$ is learned}.
  Thus, we may select learning methods based on their empirical performance. 
  This is helpful since there are no gold-standard methods for learning parameters of algorithms; 
  we discuss learning methods for our case in \cref{sec:learning-methods}.
  This situation differs from the standard supervised learning setting, where we minimize common losses, e.g., squared and logistic.
  Additionally, the uniform bound ensures that learned $\P$ does not overfit sampled instances.
\end{remark}


\subsection{Upper bound on $\pdim(\Ucal)$}
Building upon the above learning theory background, a crucial factor for establishing the generalization bound is $\pdim(\Ucal)$.
To upper bound this, we give a structural observation of the optimal value of LPs (\cref{lem:lp-polynomial}) and combine it with a general proof idea in data-driven algorithm design \citep{Gupta2017-ng,Balcan2021-fy}.

We first overview the general proof idea.
Suppose that we have an upper bound on the number of outcomes of $\Set{u(\P, \pi_i) \ge t_i}{i=1,\dots,N}$ that grows more slowly than $2^N$.
Since shattering $N$ instances requires $2^N$ outcomes, the largest $N$, such that the upper bound is at least $2^N$, serves as an upper bound on $\pdim(\Ucal)$ (intuitively, $\pdim(\Ucal) \lesssim \log_2(\text{``upper bound on the number of outcomes''})$).
Below, we discuss bounding the number of outcomes, which is the most technically important step.

To examine the number of possible outcomes, we consider a fundamental question related to sensitivity analysis of LPs: \textit{how does the optimal value of an LP behave when input parameters change?}\footnote{While a similar question is studied in \citet[Theorem~3.1]{Balcan2022-yp}, their result focuses on the case where new constraints are added to LPs to analyze branch-and-cut methods, unlike our \cref{lem:lp-polynomial}.} 
In our case, a projected LP has input parameters $(\P^\top\c, \A\P, \b) \in \R^k\times \R^{m\times k} \times \R^m$, where $\P^\top \c$ and $\A\P$ change with $\P  \in \R^{n\times k}$. 
Thus, addressing this question offers insight into the number of outcomes.
\Cref{lem:lp-polynomial} provides an answer for a more general setting, which might find other applications beyond our case since learning through LPs is not limited to the projection-based approach \citep{Wilder2019-yw,Berthet2020-wm,Tan2020-lp,Elmachtoub2022-og}.

\begin{lemma}\label{lem:lp-polynomial}
  Let $t \in \R$ be a threshold value.  
  Consider an LP $\tilde{\pi} = (\tilde{\c}, \tilde{\A}, \tilde{\b}) \in \R^k\times \R^{m\times k} \times \R^m$ such that each entry of $\tilde{\c}$, $\tilde{\A}$, and $\tilde{\b}$ is a polynomial of degree at most $d$ in $\nu$ real variables, $\thb \in \R^\nu$. 
  Assume that $\tilde{\pi}$ is bounded and feasible for every $\thb \in \R^\nu$. 
  Then, there are up to $\binom{m+2k}{2k}(m+2k+2)$ polynomials of degree at most $(2k+1)d$ in $\thb$ whose sign patterns ($<0$, $=0$, or $>0$)
  partition $\R^\nu$ into some regions, and whether $\max\Set{\tilde{\c}^\top \y}{\tilde{\A}\y \le \tilde{\b}} \ge t$ or not is identical within each region.  
\end{lemma}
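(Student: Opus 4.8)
The plan is to analyze the parametric LP $\tilde\pi$ via an optimal basis / vertex characterization and then track how the identity of the optimal vertex, together with the event $\tilde\c^\top\y \ge t$, changes as $\thb$ varies. First I would recall that since $\tilde\pi$ is bounded and feasible for every $\thb$, its optimal value is attained at a vertex of the polyhedron $\Set{\y\in\R^k}{\tilde\A\y\le\tilde\b}$. A vertex is determined by selecting $k$ linearly independent rows of $\tilde\A$ — call the index set $B$ with $|B|=k$ — and solving $\tilde\A_B\y = \tilde\b_B$; by Cramer's rule, the coordinates of this candidate vertex are ratios of determinants of $k\times k$ submatrices of $[\tilde\A\mid\tilde\b]$, hence ratios of polynomials in $\thb$ of degree at most $kd$ (numerator and denominator). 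There are at most $\binom{m}{k}$ such candidate vertices. The key insight is that to decide the truth of ``optimal value $\ge t$'' we only need to know, for each candidate vertex $\y_B(\thb)$, (a) whether it is feasible, i.e.\ $\tilde\A_i\y_B(\thb)\le\tilde\b_i$ for all remaining rows $i$, and (b) whether $\tilde\c^\top\y_B(\thb)\ge t$; because $\max\Set{\tilde\c^\top\y}{\tilde\A\y\le\tilde\b}\ge t$ holds iff some feasible candidate vertex achieves objective $\ge t$.

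The next step is to clear denominators so that all these conditions become polynomial sign conditions in $\thb$. For a fixed basis $B$, let $\Delta_B(\thb) = \det\tilde\A_B(\thb)$, a polynomial of degree $\le kd$. Feasibility of row $i$ at $\y_B$ reads $\tilde\A_i(\thb)\,\y_B(\thb)\le\tilde\b_i(\thb)$; multiplying through by $\Delta_B(\thb)$ (whose sign is itself one tracked polynomial) turns this into the sign of a polynomial $\Delta_B(\thb)\cdot(\tilde\b_i(\thb) - \tilde\A_i(\thb)\y_B(\thb))$, which after substituting the Cramer expressions is a polynomial of degree at most $kd + \max(d + (k-1)d,\, d) = (2k+1)d$ — the dominant term being $\tilde\A_i$ (degree $d$) times a $(k\times k)$ determinant in which one column of $[\tilde\A_B\mid\tilde\b_B]$ has been replaced, still degree $\le kd$, times the extra $\Delta_B$ factor of degree $\le kd$; a careful degree count gives $(2k+1)d$. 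Similarly $\Delta_B(\thb)(\tilde\c^\top\y_B(\thb) - t)$ is a polynomial of degree $\le (2k+1)d$ (here $\tilde\c$ contributes degree $d$). Collecting these over all bases $B$ and all rows $i$ gives the polynomial family; counting, for each of the $\binom{m}{k}$ bases we get $\le m - k$ feasibility polynomials, one objective polynomial, and the determinant polynomial $\Delta_B$ itself, which is roughly $\binom{m}{k}(m+2)$ — I would reconcile this with the stated bound $\binom{m+2k}{2k}(m+2k+2)$, which is a convenient looser form (note $\binom{m}{k}\le\binom{m+2k}{2k}$ is false in general, so the actual argument the authors use probably enumerates bases over an augmented system $[\tilde\A;\I_k;-\I_k]$ of $m+2k$ rows — I would follow that route to match the constant exactly).

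The final step is the partition argument: the polynomials in the family collectively have some finite set of sign patterns, and within any region of $\R^\nu$ on which every polynomial has a fixed sign, each $\Delta_B$ has fixed sign, each Cramer coordinate is a fixed rational function with non-vanishing denominator, each feasibility test has a fixed truth value, and each objective-vs-$t$ test has a fixed truth value; hence the set of feasible candidate vertices is constant and the truth of ``some feasible vertex beats $t$'' is constant. That is exactly the claimed conclusion. The main obstacle I expect is the degree bookkeeping in clearing denominators — specifically making sure the substitution of the Cramer ratios into the feasibility inequalities, after multiplying by $\Delta_B$, genuinely stays within degree $(2k+1)d$ rather than blowing up to something like $(k+1)d\cdot$ (something), and getting the counting bound to land exactly at $\binom{m+2k}{2k}(m+2k+2)$; the conceptual skeleton (bounded $\Rightarrow$ vertex optimum $\Rightarrow$ Cramer $\Rightarrow$ sign-invariant regions) is routine in the data-driven-algorithm-design literature \citep{Balcan2022-em,Balcan2022-yp}, so the real work is purely in the polynomial-degree and counting estimates.
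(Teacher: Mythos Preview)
Your skeleton (vertex enumeration $\Rightarrow$ Cramer $\Rightarrow$ sign-invariant regions) is exactly the paper's, but there is a genuine gap in the first step. You assert that because $\tilde\pi$ is bounded and feasible, the optimum is attained at a vertex of $\Set{\y}{\tilde\A\y\le\tilde\b}$. That is false without full column rank of $\tilde\A$: e.g., $\max\{y_1 : y_1\le 0\}$ in $\R^2$ is bounded and feasible but the feasible half-plane has no vertices. Nothing in the lemma's hypotheses prevents $\tilde\A(\thb)$ from being rank-deficient for some (or all) $\thb$. The paper fixes this not by augmenting rows as you guessed, but by the substitution $\y=\y^+-\y^-$ with $[\y^+;\y^-]\ge\zeros$, producing the $(m+2k)\times 2k$ constraint matrix $\Ap=[\tilde\A,\,-\tilde\A;\,-\I_{2k}]$; the $-\I_{2k}$ block forces full column rank for every $\thb$, so a vertex optimum is guaranteed. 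This is precisely where the $m+2k$ rows and $2k$ columns in the stated bound come from --- your proposed augmentation $[\tilde\A;\I_k;-\I_k]$ would instead add box constraints and change the LP.

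Once you make that substitution, the degree bookkeeping is cleaner than you fear: with $k'=2k$ variables, Cramer gives $\Delta_B$ and each numerator coordinate of $\y_B$ degree $\le k'd$, so $\Ap\cdot(\text{numerator of }\y_B)-\Delta_B\bp$ and $\cp^\top(\text{numerator of }\y_B)-t\Delta_B$ have degree $\le (k'+1)d=(2k+1)d$; no second factor of $\Delta_B$ is needed, and your tentative count $kd+\max(d+(k-1)d,d)$ both double-counts and is arithmetically off. The polynomial count is then $\binom{m+2k}{2k}$ bases times $(1+(m+2k)+1)=m+2k+2$ polynomials each. (Incidentally, your side remark that $\binom{m}{k}\le\binom{m+2k}{2k}$ fails in general is incorrect: by Vandermonde, $\binom{m+2k}{2k}\ge\binom{m}{k}\binom{2k}{k}\ge\binom{m}{k}$.)
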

\newcommand{\Ap}{{\A'}}
\newcommand{\bp}{{\b'}}
\newcommand{\cp}{\c'}
\begin{proof}
  First, we rewrite the LP $\tilde{\pi} = (\tilde{\c}, \tilde{\A}, \tilde{\b})$ as an equivalent $2k$-dimensional LP with non-negativity constraints:
  $\max\Set[\large]{\tilde{\c}^\top(\y^+ - \y^-)}{\tilde{\A}(\y^+ - \y^-)\le\tilde{\b},\, [\y^+;\y^-]\ge\zeros}$.
  The resulting constraint matrix, $\Ap \coloneqq [\tilde{\A}, -\tilde{\A}; -\I_{2k}]$, has full column rank, which simplifies the subsequent discussion.
  Note that the maximum degree of input parameters remains at most $d$, while the sizes, $m$ and $k$, increase to $m'\coloneqq m+2k$ and $k'\coloneqq 2k$, respectively.
  Below, we focus on the reformulated LP $(\cp, \Ap, \bp) \in \R^{k'}\times \R^{m'\times k'} \times \R^{m'}$, where $\cp\coloneqq [\tilde{\c}; -\tilde{\c}]$, $\bp\coloneqq [\tilde{\b}; \zeros_{2k}]$, and $\Ap$ has full column rank.
  
  We consider determining $\max\Set{\cp^\top \y}{\Ap\y \le \bp} \ge t$ or not by checking all vertices of the feasible region. 
  For any size-$k'$ subset, $I \subseteq \set*{1,\dots,m'}$, of row indices of $\Ap \in \R^{m'\times k'}$, let $\Ap_I$ denote the $k'\times k'$ submatrix of $\Ap$ with rows restricted to $I$ and $\bp_I \in \R^{k'}$ the corresponding subvector of $\bp$. 
  For every subset $I$ with $\det \Ap_I \neq 0$, let $\y_I \coloneqq {\Ap_I}^{-1}\bp_I$. 
  Since the LP is bounded and feasible, and $\Ap$ has full column rank, there is a vertex optimal solution written as $\y_I = {\Ap_I}^{-1}\bp_I$ for some $I$ (see the proof of \citet[Proposition~3.1]{Korte2012-gy}).  
  Thus, the optimal value is at least $t$ if and only if there exists at least one size-$k'$ subset $I$ with 
  $\det \Ap_I \neq 0$, $\Ap \y_I \le \bp$, and $\cp^\top \y_I \ge t$.

  Based on the above observation, we identify polynomials whose sign patterns determine $\max\Set{\cp^\top \y}{\Ap\y \le \bp} \ge t$ or not. 
  For any subset $I$, if $\det \Ap_I \neq 0$, Cramer's rule implies that $\y_I = {\Ap_I}^{-1}\bp_I$ is written as $\bm{f}_I(\thb)/\det \Ap_I$, where $\bm{f}_I(\thb)$ is some $k'$-valued polynomial vector of $\thb$ with degrees at most $k'd$. 
  Thus, we can check $\Ap \y_I \le \bp$ and $\cp^\top \y_I \ge t$ by examining sign patterns of $m'+1$ polynomials, $\Ap\bm{f}_I(\theta) - (\det \Ap_I) \bp$ and $\cp^\top \bm{f}_I(\theta) - t\det \Ap_I$, whose degrees are at most $(k'+1)d$. 
  Considering all the $\binom{m'}{k'}$ choices of $I$, there are $\binom{m'}{k'}(m'+2)$ polynomials of the form $\det \Ap_I$, $\Ap\bm{f}_I(\theta) - (\det \Ap_I) \bp$, and $\cp^\top \bm{f}_I(\theta) - t\det \Ap_I$ with degrees at most $(k'+1)d$ such that their sign patterns partition $\R^\nu$ into some regions, and $\max\Set{\cp^\top \y}{\Ap\y \le \bp} \ge t$ or not is identical within each region. 
  Substituting $m+2k$ and $2k$ into $m'$ and $k'$, respectively, completes the proof.
\end{proof}

\Cref{lem:lp-polynomial} states that the outcome of whether $u(\P, \pi) = \max\Set{\c^\top\P \y}{\A\P\y \le \b}$ exceeds $t$ or not is determined by sign patterns of polynomials of $\P$, and an upper bound on the sign patters of polynomials is known as \textit{Warren's theorem} \citep{Warren1968-hp}, as detailed shortly.
Combining them with the aforementioned general idea yields the following upper bound on $\pdim(\Ucal)$.
\begin{theorem}\label{theorem:pdim-upper-bound}
  $\pdim(\Ucal) = \Ord(nk^2\log mk)$.
\end{theorem}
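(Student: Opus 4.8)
The plan is to combine \cref{lem:lp-polynomial} with Warren's theorem on sign patterns of polynomials, following the general recipe sketched above: bound the number of distinct outcomes $\Set{u(\P,\pi_i) \ge t_i}{i=1,\dots,N}$ achievable as $\P$ ranges over $\R^{n\times k}$, then solve for the largest $N$ for which this bound is at least $2^N$.

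First I would fix an arbitrary set of $N$ instances $\pi_1 = (\c_1,\A_1,\b_1),\dots,\pi_N = (\c_N,\A_N,\b_N)$ and arbitrary thresholds $t_1,\dots,t_N$. For each $i$, the projected LP associated with $\pi_i$ has parameters $(\P^\top\c_i,\, \A_i\P,\, \b_i)$; every entry of $\P^\top\c_i$ and $\A_i\P$ is a polynomial of degree at most $1$ in the $nk$ entries of $\P$ (which play the role of $\thb$, so $\nu = nk$ and $d = 1$), and $\b_i$ is constant. By \cref{assump:feasible_bounded}, each such projected LP is bounded and feasible for every $\P$, so \cref{lem:lp-polynomial} applies with these parameters: there are at most $\binom{m+2k}{2k}(m+2k+2)$ polynomials in $\P$, each of degree at most $(2k+1)\cdot 1 = 2k+1$, whose sign patterns determine whether $u(\P,\pi_i) \ge t_i$. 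Ranging over all $i = 1,\dots,N$, we collect a family of at most $M \coloneqq N\binom{m+2k}{2k}(m+2k+2)$ polynomials of degree at most $2k+1$ in $nk$ variables, and the outcome vector in $\set{0,1}^N$ is constant on each sign-pattern region of this family.

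Next I would invoke Warren's theorem \citep{Warren1968-hp}: the number of distinct sign patterns (more precisely, the number of connected components of the complement, or the standard bound on the number of realizable sign vectors) of $M$ polynomials of degree at most $D$ in $\nu$ variables is at most $(8eDM/\nu)^\nu$ for $M \ge \nu$. Substituting $D = 2k+1$, $\nu = nk$, and $M = N\binom{m+2k}{2k}(m+2k+2)$, the number of realizable outcome vectors is at most $\bigl(8e(2k+1)N\binom{m+2k}{2k}(m+2k+2)/(nk)\bigr)^{nk}$. For $\Ucal$ to shatter the $N$ instances, this quantity must be at least $2^N$. Taking logarithms, shattering requires $N \le nk\log_2\!\bigl(O(kN\binom{m+2k}{2k}(m+2k+2)/(nk))\bigr)$. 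Using $\binom{m+2k}{2k} \le (e(m+2k)/(2k))^{2k} = 2^{O(k\log(m/k+1))}$, the logarithm is $O(\log N + k\log mk)$ (absorbing lower-order terms), so $N = O(nk(\log N + k\log mk))$. A standard manipulation — if $N \le a(\log N + b)$ then $N = O(a\log a + ab)$ — with $a = nk$ and $b = O(k\log mk)$ yields $N = O(nk\log(nk) + nk^2\log mk) = O(nk^2\log mk)$, since $\log(nk)$ is dominated by $k\log mk$. Because the $\pi_i$ and $t_i$ were arbitrary, this bounds $\pdim(\Ucal)$, proving the claim.

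\textbf{Main obstacle.} The genuinely hard part is already encapsulated in \cref{lem:lp-polynomial}, which I am permitted to assume; given it, the remaining work is the routine ``Warren + solve the transcendental inequality for $N$'' argument that is standard in data-driven algorithm design. The only points needing care are: (a) correctly accounting that the $2k+1$ factor comes from $(2k'+1)d$ with $k' = 2k$ hidden inside the lemma's statement — here $d=1$ because $\P$ enters the projected LP's data linearly, so the polynomials have degree $2k+1$ in the entries of $\P$, not worse; and (b) being careful that Warren's theorem requires $M \ge \nu$, which holds for all $N$ large enough that shattering could plausibly occur (and for small $N$ the bound is trivial). Everything else is bookkeeping with binomial coefficients and the $\log$-solving lemma, which I would not grind through in detail.
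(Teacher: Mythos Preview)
Your approach is essentially identical to the paper's: apply \cref{lem:lp-polynomial} with $d=1$ and $\nu=nk$, pool the resulting $N\binom{m+2k}{2k}(m+2k+2)$ polynomials of degree at most $2k+1$, invoke Warren's theorem, and solve for $N$. The one slip is in the last simplification: the assertion that ``$\log(nk)$ is dominated by $k\log mk$'' is false in general (take $n$ exponentially large relative to $m$ and $k$), so your $N\le a(\log N+b)\Rightarrow N=O(a\log a+ab)$ route only delivers $O\bigl(nk\log(nk)+nk^2\log mk\bigr)$, which does not quite match the stated bound. The paper avoids the stray $\log n$ by keeping the inequality in the form $N\le nk\log_2(N/(nk))+O(nk^2\log mk)$ and using the elementary bound $x\log_2(1/x)\le\tfrac{2}{3}$ with $x=nk/N$ to absorb the first term into $\tfrac{2}{3}N$, yielding $N=O(nk^2\log mk)$ directly. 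This is a bookkeeping fix, not a conceptual gap.
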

\begin{proof}
  Let $(\pi, t) \in \Pi\times\R$ be a pair of an LP instance and a threshold value.
  Setting $\thb = \P$ and $d = 1$ in \cref{lem:lp-polynomial}, we have up to $\binom{m+2k}{k}(m+2k+2)$ polynomials of degree at most $2k+1$ whose sign patterns determine whether $u(\P, \pi) \ge t$ or not. 
  Thus, given $N$ pairs of input instances and threshold values, $(\pi_i, t_i)_{i=1}^N$, we have up to $N \times \binom{m+2k}{2k}(m+2k+2)$ 
  polynomials whose sign patterns determine $u(\P, \pi_i) \ge t_i$ or not for all $i=1,\dots,N$, i.e., outcomes of $N$ instances. 
  
  Warren's theorem states that given $\ell$ polynomials of $\nu$ variables with degrees at most $\Delta$, the number of all possible sign patterns is at most $\prn*{8\mathrm{e}\ell\Delta/\nu}^\nu$ \citep{Warren1968-hp} (see also \citet[Corollary~2.1]{Goldberg1995-af}).
  In our case, the number of polynomials is $\ell = N \times \binom{m+2k}{2k}(m+2k+2)$, and each of them has $\nu = nk$ variables ($\P$'s entries) and degrees at most $\Delta = 2k+1$. 
  Thus, the number of all possible outcomes is at most 
  $\prn*{8\mathrm{e}N\binom{m+2k}{2k}\frac{(m+2k+2)(2k+1)}{nk}}^{nk} 
  \lesssim \prn*{\frac{N}{nk}}^{nk}\mathrm{poly}(m,k)^{nk^2}$.
  To shatter the set of $N$ instances, the right-hand side must be at least $2^N$. 
  Taking the base-2 logarithm, it must hold that 
  $
  N \lesssim nk\log_2\frac{N}{nk} + \Ord(nk^2\log mk) \le \frac23N + \Ord(nk^2\log mk)
  $, 
  where we used $x\log_2\frac{1}{x} \le \frac23$ for $x>0$.
  Therefore, $\Ucal$ can shatter $\Ord(nk^2\log mk)$ instances, obtaining the desired bound on $\pdim(\Ucal)$.
\end{proof}

\subsection{Lower bound on $\pdim(\Ucal)$}
We then provide an $\Omega(nk)$ lower bound on $\pdim(\Ucal)$ to complement the above $\tilde\Ord(nk^2)$ upper bound, implying the tightness up to an $\tilde\Ord(k)$ factor. 
See \cref{app:proof-lower-bound} for the proof.
\begin{theorem}\label{theorem:pdim-lower-bound}
  $\pdim(\Ucal) = \Omega(nk)$.
\end{theorem}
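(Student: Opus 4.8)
The plan is to exhibit a family of LP instances that $\Ucal$ can shatter, of size $\Omega(nk)$. Since the pseudo-dimension lower bound is about constructing hard instances, the freedom we have is in choosing both the instances $\pi_i = (\c_i, \A_i, \b_i)$ and exploiting the $nk$ free parameters of $\P \in \R^{n\times k}$. A natural idea is to make the projected LP \emph{trivial} to analyze: design instances so that $u(\P, \pi)$ depends on $\P$ in a transparent, essentially linear or coordinate-wise way, so that we can independently control the sign of $u(\P,\pi_i) - t_i$ for each $i$ by tuning disjoint blocks of $\P$'s entries.

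Concretely, I would first handle a single ``coordinate'' gadget. Fix attention on one column of $\P$, say column $j$, and one coordinate, say row $a$, so that $\P_{aj}$ is a free scalar. Build a one-dimensional-feeling LP instance $\pi$ whose optimal projected value, as a function of $\P$, reduces to something like $\max\{ \P_{aj} \cdot (\text{something}) : \text{box constraints}\}$, i.e.\ an affine function of $\P_{aj}$ clamped to an interval $[0,H]$. For instance, take $\c = \e_a$ (the $a$-th standard basis vector scaled appropriately), and choose $\A, \b$ to impose $-1 \le y_j \le 1$ and to keep all other coordinates of $\y$ irrelevant or pinned, using Assumption \ref{assump:feasible_bounded} to keep $\zeros$ feasible. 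Then $u(\P,\pi) = |\P_{aj}|$ up to scaling, or more cleanly $\max\{\P_{aj} y_j : -1\le y_j\le 1\} = |\P_{aj}|$. Picking a threshold $t \in (0, \text{range})$, the event $u(\P,\pi)\ge t$ is exactly $|\P_{aj}| \ge t$, which is controllable independently of all other entries of $\P$.

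Next I would assemble $\Theta(nk)$ such gadgets, one per entry $(a,j)$ of $\P$ (or per a constant fraction of them, to keep dimensions consistent), into instances $\pi_{a,j}$ with thresholds $t_{a,j}$, making sure each $\pi_{a,j}$ fits in the common ambient dimensions $n$ and $m$ and satisfies Assumption \ref{assump:feasible_bounded}. The key point is that the $\binom{n}{1}\cdot k = nk$ entries of $\P$ are independent real parameters, and each gadget $\pi_{a,j}$'s shattering behavior depends only on $\P_{aj}$; so for any desired sign pattern in $\{0,1\}^{nk}$ we simply set $\P_{aj}$ large (to get $u \ge t_{a,j}$) or small (to get $u < t_{a,j}$) accordingly. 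This realizes all $2^{nk}$ outcomes, so $\Ucal$ shatters $nk$ instances, giving $\pdim(\Ucal)\ge nk = \Omega(nk)$. A minor technical wrinkle is that a single projected LP's value may couple several entries of a column through $\A\P\y$; I would avoid this by making each instance's constraint matrix act on only one coordinate of $\y$ at a time (e.g.\ $\A$ has a single nonzero column) and by choosing $\c$ supported on a single row, so that $\c^\top\P\y$ and $\A\P\y$ both see only $\P_{aj}$.

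The main obstacle I anticipate is the bookkeeping to make all gadgets simultaneously valid instances in a \emph{fixed} pair of dimensions $(n,m)$ while keeping each one's $u(\P,\cdot)$ a function of a \emph{distinct} single entry of $\P$ — in particular, ensuring the box constraint $-1 \le y_j \le 1$ (needed for boundedness, per Assumption \ref{assump:feasible_bounded}(iii)) does not inadvertently let some other column influence the value, and confirming $\zeros$ remains feasible throughout. This is routine but must be done carefully; I would also double-check that the thresholds $t_{a,j}$ lie strictly inside the attainable range $[0,H]$ so both outcomes are genuinely achievable, which is where condition (iii)'s bound $H$ gets used.
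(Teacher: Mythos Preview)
Your high-level strategy is right and matches the paper's: construct instances $\pi_{a,j}$ so that $u(\P,\pi_{a,j})$ depends only on a single entry $\P_{aj}$, then shatter by toggling those entries independently. However, the step you flag as a ``minor technical wrinkle'' is actually the crux, and your proposed fix does not work. The constraints of the projected LP are $\A\P\y \le \b$, so you cannot impose a box constraint like $-1 \le y_j \le 1$ on $\y$ through $\A$ and $\b$ alone---any such constraint inevitably involves $\P$. Your suggestion of giving $\A$ a single nonzero column (say column $a$) makes every entry of $\A\P\y$ a multiple of $\P_{a,:}\y = \sum_{j'} \P_{aj'} y_{j'}$, which couples all entries of row $a$ of $\P$ rather than isolating $\P_{aj}$; worse, with $\c = \e_a$ the objective is also $\P_{a,:}\y$, so the optimal value collapses to the right-hand side of whatever box you impose on $\P_{a,:}\y$, independent of $\P$. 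In short, you cannot simultaneously keep all $nk$ entries of $\P$ free \emph{and} pin coordinates of $\y$ via the constraints.

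The paper resolves this by \emph{sacrificing} $2k$ rows of $\P$: it restricts to $\P = [\Q; \I_k; -\I_k]$ with $\Q \in \{0,1\}^{(n-2k)\times k}$ free, and takes a fixed $\A = [\zeros_{2k,n-2k}, \I_{2k}]$ that reads off only those last $2k$ rows. Then $\A\P = [\I_k; -\I_k]$ regardless of $\Q$, so $\A\P\y \le \b_s$ genuinely becomes a box on $\y$ (namely $y_s \in [0,1]$ and $y_{j'} = 0$ for $j' \ne s$). With $\c_r = [\e_r; \zeros_{2k}]$ the objective is $\e_r^\top \Q\y = Q_{rs} y_s$, maximized at $Q_{rs} \in \{0,1\}$, and threshold $1/2$ shatters all $(n-2k)k$ instances. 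The idea you are missing is that isolating coordinates of $\y$ requires hard-coding part of $\P$; this costs $2k$ rows of freedom but $(n-2k)k$ is still $\Omega(nk)$.
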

Our proof indeed gives the same lower bound on the \textit{$\gamma$-fat shattering dimension} for $\gamma < 1/2$, which implies a lower bound of $\Omega(nk/\varepsilon)$ on $N$, the sample size needed to guarantee~\eqref{eq:generalization-bound} \citep[Theorem~19.5]{Anthony2009-mm}. 
Thus, in terms of the sample complexity, our result is tight up to an $\tilde{\Ord}(k/\varepsilon)$ factor. 
The $1/\varepsilon$ gap is inevitable in general \citep[Section~19.5]{Anthony2009-mm}, while closing the $\tilde{\Ord}(k)$ gap is an interesting open problem.


\section{Learning methods}\label{sec:learning-methods}
We then discuss how to learn projection matrices from training datasets. 
From the bound~\eqref{eq:generalization-bound}, given $N$ training LP instances, the expected solution quality on future instances likely remains within the range of $\pm\varepsilon$ from the empirical one, where $\varepsilon \lesssim H\sqrt{\pdim(\Ucal)/N} \lesssim Hk\sqrt{n/N}$ due to \cref{theorem:pdim-upper-bound}, \textit{regardless of how we learn a projection matrix $\P$}.
Therefore, in practice, we only need to find an empirically good projection matrix $\P$, which motivates us to explore various ideas for learning $\P$.
Below, we discuss two natural ideas: PCA- and gradient-based methods.

\begin{remark}[Training time]\label{rem:learning-time}
We emphasize that learning methods are used only before addressing future LP instances and not once a projection matrix $\P$ is learned.
Hence, they can take much longer than the time for solving new LP instances. 
Similarly, we suppose that optimal solutions to training instances are available, as we can compute them a priori.
Note that similar premises are common in most data-driven algorithm research \citep{Indyk2019-cn,Berthold2021-wk,Bartlett2022-mu,Balcan2023-dn,Fan2023-gr,Sun2023-yc}.
Considering this, our learning methods are primarily intended for conceptual simplicity, not for efficiency. 
For completeness, we present the theoretical time complexity and the training time taken in the experiments in \cref{app:time-learning}.
\end{remark}

\subsection{PCA-based method}\label{subsec:pca}
As described in \cref{sec:projection-for-lps}, a projection matrix $\P$ should preferably have columns that span a low-dimensional subspace around which future optimal solutions will appear.
Hence, a natural idea is to use PCA to extract such a subspace, regarding optimal solutions to training instances as data points. 

Formally, let $\X \in \R^{N \times n}$ be a matrix whose $i$th row is an optimal solution to the $i$th training instance. 
We apply PCA to this $\X$. 
Specifically, we subtract the mean, $\bar\x = \frac{1}{N}\X^\top\ones_N$, from each row of $\X$ and apply the singular value decomposition (SVD) to $\X - \ones_N \bar\x^\top$, obtaining a decomposition of the form $\bm{U}\bm{\Sigma}\bm{V}^\top = \X - \ones_N \bar\x^\top$. 
Let $\V_{k-1} \in \R^{n\times (k-1)}$ be the submatrix of $\V$ whose columns are the top-($k-1$) right-singular vectors of $\X - \ones_N \bar\x^\top$.
We use $\P = [\bar\x, \V_{k-1}] \in \R^{n \times k}$ as a projection matrix. 
Here, $\bar\x$ is concatenated due to the following consideration: 
since $\V_{k-1}$ is designed to satisfy $\V_{k-1}\bm{Y}'\approx \X^\top - \bar\x \ones_N^\top$ for some $\bm{Y}' \in \R^{(k-1)\times N}$, we expect $[\bar\x, \V_{k-1}]\bm{Y} \approx \X^\top$ to hold for some $\bm{Y} \in \R^{k\times N}$, hence $\P = [\bar\x, \V_{k-1}]$.
This method is not so costly when optimal solutions to training LP instances are given, as it only requires finding the top-($k-1$) right-singular vectors of $\X - \ones_N \bar\x^\top$.

\subsection{Gradient-based method}\label{subsec:gradient}
While the PCA-based method aims to extract the subspace into which future optimal solutions are likely to fall, it only uses optimal solutions and discards input parameters of LPs.
As a complementary approach, we provide a gradient-based method that directly improves the optimal value of LPs.

As a warm-up, consider maximizing $u(\P, \pi) = \max\Set{\c^\top\P\y}{\A\P\y\le\b}$ of a single LP instance $\pi = (\c, \A, \b)$ via gradient ascent.
Assume that the projected LP satisfies a \textit{regularity condition}, which requires the existence of an optimal solution $\y^*$ at which active constraints are linearly independent. 
Then, $u(\P, \pi)$ is differentiable in $\P$ and the gradient is expressed as follows \citep[Theorem~1]{Tan2020-lp}
(see \cref{app:gradient} for details of the derivation):
\begin{equation}\label{eq:gradient}
  \nabla u(\P, \pi) = \c {\y^*}^\top - \A^\top \lamb^* {\y^*}^\top,
\end{equation}
where $\lamb^* \in \R^m$ is a dual optimal solution. 
Thus, we can use the gradient ascent method to maximize $u(\P, \pi)$ under the regularity condition. 
However, this condition is sometimes prone to be violated, particularly when \textit{Slater's condition} does not hold (i.e., there is no strictly feasible solution).
For example, if the original LP has a constraint $\x\ge\zeros_n$ and every column of $\P$ has opposite-sign entries, it is likely that only $\y = \zeros_k$ satisfies $\P\y\ge\zeros_n$ by equality, which is the unique optimal solution but not strictly feasible. 
In this case, the regularity condition is violated since all rows of $\P \in \R^{n \times k}$ are active at $\zeros_k$ and linearly dependent due to $n > k$.
To alleviate this issue, we apply the following projection for $j=1,\dots,k$ before computing the gradient in \eqref{eq:gradient}:
\begin{equation}\label{eq:projection}
  \P_{:,j} \gets \argmin_{\x\in\R^n}\Set{\norm{\x - \P_{:,j}}_2}{\A\x \le \b}, 
\end{equation} 
where $\P_{:,j}$ denotes the $j$th column of $\P$.
This minimally changes each column $\P_{:,j}$ to satisfy the original constraints. 
Consequently, any convex combination of $\P$'s columns is feasible for the original LP, increasing the chance that there exists a strictly feasible solution in $\Set{\y\in\R^k}{\A\P\y\le\b}$, although it is not guaranteed.
This improves the likelihood that the regularity condition is satisfied.

  Given $N$ training instances, $\pi_1,\dots,\pi_N$, we repeatedly update $\P$ as with SGD: for each $\pi_i$, we iterate to compute the gradient~\eqref{eq:gradient} and to update $\P$ with it. 
The projection \eqref{eq:projection} onto the feasible region of $\pi_i$ comes before computing the gradient for $\pi_i$.
We call this method SGA (stochastic gradient ascent).

\subsection{Final projection for feasibility}\label{subsec:impl}
The previous discussion suggests that making each column of $\P$ feasible for training LP instances can increase the likelihood that future LP instances projected by $\P$ will have strictly feasible solutions. 
Considering this, after obtaining a projection matrix $\P$ with either the PCA- or gradient-based method, we project each column of $\P$ onto the intersection of the feasible regions of training LP instances, which we call the \textit{final projection}. 
This can be done similarly to \eqref{eq:projection} replacing the constraints with $[\A_1;\dots;\A_N]\x \le [\b_1;\dots;\b_N]$. 
If $\A_1,\dots,\A_N$ are identical, we can do it more efficiently by replacing the constraints with $\A_1 \x \le \min\set{\b_1,\dots,\b_N}$, where the minimum is taken element-wise.
Note that although the final projection can be costly for large $N$, we need to do it only once at the end of learning $\P$.
This final projection never fails since $\zeros_n$ is always feasible as in \cref{assump:feasible_bounded}.

\newcommand{\noise}{\omega}
\section{Experiments}\label{section:experiment}

We experimentally evaluate the data-driven projection approach.
We used MacBook Air with Apple M2 chip, 24 GB of memory, and macOS Sonoma~14.1. 
We implemented algorithms in Python~3.9.7 with NumPy~1.23.2. 
We used Gurobi~10.0.1 \citep{Gurobi_Optimization_LLC2023-ae} for solving LPs and computing projection in \eqref{eq:projection}. 
We used the following three synthetic and five realistic datasets, each of which consists of $300$ LP instances ($200$ for training and $100$ for testing).
\Cref{tab:problem-size} summarizes LP sizes of the eight datasets.\footnote{While Gurobi can solve larger LPs, we used the moderate-size LPs as the learning methods could take much longer with the limited computational resources. We admit that larger instances might introduce new challenges. Nevertheless, the trends observed in our experiments offer informative insights for larger scenarios as well.\label{footnote:problem-size}}

\begin{table}[t]
  \centering
  \begin{small}
  \caption{Sizes of inequality-form LPs, where $m$ ($n$) represents the number of constraints (variables).}
  \label[table]{tab:problem-size}
  \begin{tabular}{lcccccccc}
  \toprule
   & Packing & MaxFlow & MinCostFlow & GROW7 & ISRAEL & SC205 & SCAGR25 & STAIR\\
  \midrule
  $m$ & 50 & 1000 & 1000 & 581 & 316 & 317 & 671 & 696 \\
  $n$ & 500 & 500 & 500  & 301 & 142 & 203 & 500 & 467 \\
  \bottomrule
  \end{tabular}
  \end{small}
\end{table}

\begin{figure*}[htb]
  \centering
  \includegraphics[width=.95\linewidth]{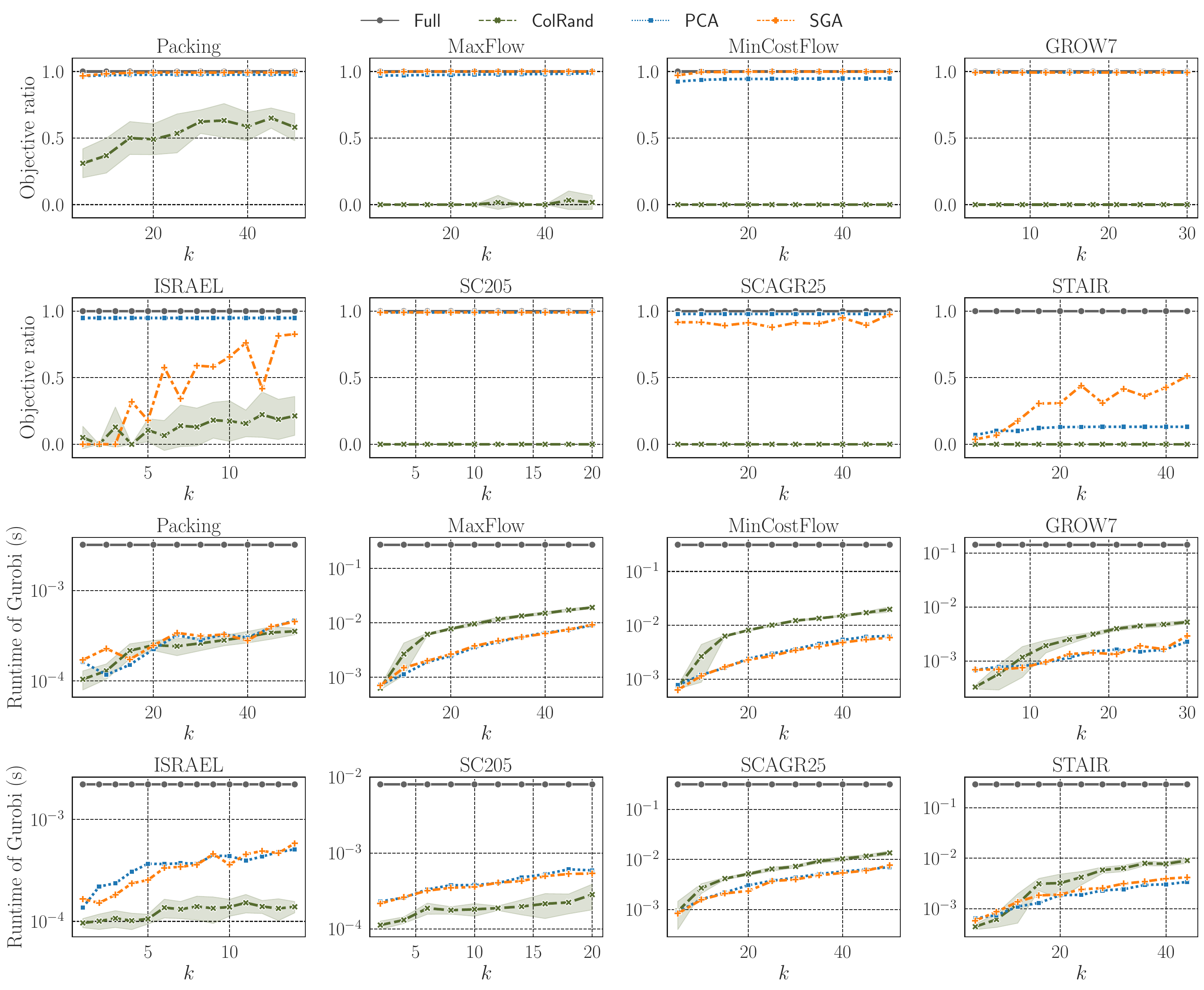}
  \caption{Plots of objective ratios (upper) and Gurobi's running times (lower, semi-log) for \Full, \ColRand, \PCA, and \SGA averaged over 100 test instances.
  The error band of \ColRand indicates the standard deviation over $10$ independent trials. 
  The results of \Full are shown for every $k$ for reference, although it always solves $n$-dimensional LPs and hence is independent of $k$. 
  }
  \label{fig:results}
\end{figure*}

\textbf{Synthetic datasets.}
We consider three types of LPs representing packing, maximum flow, and minimum-cost flow problems, denoted by Packing, MaxFlow, and MinCostFlow, respectively.
A packing problem is an LP with non-negative parameters $\c$, $\A$, and $\b$. 
We created a base instance by drawing their entries from the uniform distribution on $[0, 1]$ and multiplying $\b$ by $n$. 
We then obtained 300 random instances by multiplying all input parameters by $1 + \noise$, where $\noise$ was drawn from the uniform distribution on $[0, 0.1]$. 
To generate MaxFlow and MinCostFlow LPs, we first randomly created a directed graph with $50$ vertices and $500$ arcs and fixed source and sink vertices, denoted by $s$ and $t$, respectively.
We confirmed there was an arc from $s$ to $t$ to ensure feasibility. 
We set base arc capacities to $1$, which we perturbed by multiplying $1 + \noise$ with $\noise$ drawn from the uniform distribution on $[0, 0.1]$, thus obtaining 300 MaxFlow instances.
For MinCostFlow, we set supply at $s$ and demand at $t$ to $1$. 
We set base arc costs to $1$ for all arcs but $(s, t)$, whose cost was fixed to be large enough, and perturbed them similarly using $1 + \noise$ to obtain 300 MinCostFlow instances. 
We transformed MaxFlow and MinCostFlow instances into equivalent inequality-form LPs with a method given in \cref{app:remove-eq}, which requires a (trivially) feasible solution $\x_0$. 
For MaxFlow, we used $\x_0 = \zeros$ (i.e., no flow) as a trivially feasible solution. 
For MinCostFlow, we let $\x_0$ be all zeros but a single $1$ at the entry corresponding to $(s, t)$, which is a trivially feasible (but costly) solution.

\textbf{Realistic datasets.}
We used five LPs in Netlib \citep{Browne1995-wd}, GROW7, ISRAEL, SC205, SCAGR25, and STAIR. 
For each, we generated datasets of 300 random instances.
To create realistic datasets, we made them contain 2\% of outliers as follows. 
For normal 98\% data points, we perturbed coefficients of objective functions by multiplying $1+0.1\noise$, where $\noise$ was drawn from the normal distribution; 
for 2\% outliers, we perturbed them by multiplying $1+\noise$, i.e., 10 times larger noises. 
Except for ISRAEL, the LPs have equality constraints. 
We transformed them into inequality-form LPs as described in \cref{app:remove-eq}, using $\x_0$ found by the initialization procedure of Gurobi's interior-point method.\footnote{We expect this $\x_0$ is (close to) the \textit{analytic center}, although we could not verify Gurobi's internal processes.}

\textbf{Methods.}
We compared four methods, named \Full, \ColRand, \PCA, and \SGA.
The first two are baseline methods, while the latter two are our data-driven projection methods.
Note that all four methods solved LPs with Gurobi, the state-of-the-art commercial solver. 
The only difference among them lies in how to reduce the dimensionality of LPs, as detailed below.

\textbf{\Full:} a baseline method that returns original $n$-dimensional LPs without reducing the dimensionality. 

\textbf{\ColRand:} a column-randomized method based on the work by \citet{Akchen2024-gg}, which reduces the dimensionality by selecting $k$ out of $n$ variables randomly and fixing the others to zeros. 

\textbf{\PCA:} the PCA-based method that reduces the dimensionality with a projection matrix $\P$ learned as in \cref{subsec:pca}, followed by the final projection described in \cref{subsec:impl}.

\textbf{\SGA:} the gradient-based method that learns $\P$ as described in \cref{subsec:gradient}, followed by the final projection as with \PCA.  
We initialized $\P$ with that obtained by \PCA and conducted a single epoch of training, setting the learning rate to $0.01$.\footnote{While we also tried \SGA with the random initialization, we found that the \PCA-initialization worked better. We present the results with the random initialization in \cref{fig:results_rand_init} in the appendix for completeness.\label{footnote:rand-init}}

For \ColRand, \PCA, and \SGA, we used increasing values of the reduced dimensionality, $k = \floor*{\frac{n}{100}}, 2\floor*{\frac{n}{100}}, \dots$, until it reached the maximum value no more than $\floor*{\frac{n}{10}}$, i.e., up to 10\% of the original dimensionality. 
\PCA and \SGA learned projection matrices $\P$ from $N=200$ training instances, which were then used to reduce the dimensionality of 100 test instances.
For \ColRand, we tried $10$ independent choices of $k$ variables and recorded the average and standard deviation.

\textbf{Results.}
\Cref{fig:results} shows how the solution quality and running time of Gurobi differ among the four methods, where ``objective ratio'' means the objective value divided by the optimal value computed by \Full. 
For all datasets except STAIR, \PCA and/or \SGA with the largest $k$ achieved about 95\% to 99\% objective ratios, while being about 4 to 70 times faster than \Full. 
Regarding STAIR, \PCA and \SGA attained 13.1\% and 51.2\% objective ratios, respectively. 
By stark contrast, \ColRand resulted in objective ratios close to zero in most cases except for Packing and ISRAEL. 
The results suggest that given informative training datasets, data-driven projection methods can lead to significantly better solutions than the random projection method. 
Regarding running times, there were differences between \PCA/\SGA and \ColRand, which were probably caused by the numerical property of Gurobi. 
Nonetheless, all of the three were substantially faster than \Full.
In summary, the data-driven projection methods achieve high solution quality while greatly reducing the time for solving LPs.

Comparing \PCA and \SGA, \SGA achieved better objectives than \PCA in Packing, MaxFlow, MinCostFlow, and STAIR, while performing similarly in GROW7 and SC205. 
In ISRAEL and SCAGR25, \SGA was worse than \PCA, but this is not surprising since optimizing $u(\P, \pi)$ is a non-convex problem. 
The results suggest that no method could be universally best. 
Fortunately, the generalization bound~\eqref{eq:generalization-bound} justifies selecting a learning method based on empirical performance. 
Specifically, if we adopt a learning method that produces $\P$ with the best empirical performance on $N$ instances at hand, its expected performance on future instances is likely to stay within the range of $\pm \varepsilon$ of the empirical one, where $\varepsilon\lesssim H\sqrt{{\pdim(\Ucal)}/{N}} \lesssim Hk\sqrt{{n}/{N}}$ since $\pdim(\Ucal) = \tilde{\Ord}(nk^2)$ due to \cref{theorem:pdim-upper-bound}.

\section{Conclusion}
We have studied the data-driven projection approach to LPs.
We have established a generalization bound by proving an $\tilde{\Ord}(nk^2)$ upper bound on the pseudo-dimension and complemented it by an $\Omega(nk)$ lower bound. 
We have also proposed PCA- and gradient-based learning methods and experimentally evaluated them. 
Our theoretical and empirical findings lay the groundwork for the further development of the data-driven approach to LPs and contribute to the broader trend of AI/ML for optimization~\citep{Van_Hentenryck2024-qz}.

\section{Limitations and discussions}\label{sec:limitations}
Our work is limited to the statistical learning setting with assumptions on LP instances (see \cref{subsec:data-driven-projection} and \cref{assump:feasible_bounded}).
In particular, the current approach cannot deal with equality constraints varying across instances since they usually make LP instances have no common feasible solution.
Despite the narrowed applicability, we believe our setting is a reasonable starting point for developing the data-driven projection approach to LPs, as is also discussed in \cref{rem:validity} and the paragraph following \cref{assump:feasible_bounded}.
Overcoming these limitations will require more involved methods, such as training neural networks to extract meaningful low-dimensional subspaces from non-i.i.d.\ messy LP instances.

Our learning methods are not efficient, and applying them to huge LPs in practice might be challenging. 
Similar challenges are common in most data-driven algorithm research, as discussed in \cref{rem:learning-time}, and we believe our conceptually simple learning methods are helpful for future research.
In numerical linear algebra, \citet{Indyk2021-yn} found that few-shot learning methods can efficiently learn sketching matrices. 
We expect similar ideas will be useful for learning projection matrices for LPs efficiently.
For the same reason, our experiments are limited to moderate-size LPs, as mentioned in \cref{footnote:problem-size}.
Nevertheless, the results sufficiently serve as a proof of concept of the data-driven projection approach.

There also exist general limitations of the projection-based approach \citep{Vu2018-ty,Poirion2023-bm,Akchen2024-gg}.
First, it does not consider solver-specific aspects, including numerical stability and sparsity, as discussed in \cref{rem:solver-specific}. 
Second, the projection-based approach has a limited impact on the theoretical time complexity.
The theoretical time complexity of the projection-based approach is dominated by two factors:
multiplying $\P$ to reduce the dimensionality and solving the projected LP.
Recent theoretical studies have revealed that solving an LP takes asymptotically the same computation time as matrix multiplication~\citep{Cohen2021-qt,Jiang2021-ih}, suggesting projections may not contribute to improving the total theoretical time complexity.
Nevertheless, the projection-based approach leads to dramatic speedups in practice, as in \cref{fig:results}.
Moreover, it can be even faster beyond the theoretical implications when GPUs are available.
It is noteworthy that the projection-based approach largely benefits from GPUs, as matrix multiplication can be highly parallelized.
An exciting future direction is to combine recent GPU-implemented LP solvers \citep{Applegate2021-ab,Lu2023-ks,Lu2023-zi} with projections, which will have vast potential for solving huge LPs efficiently.

\subsection*{Acknowledgements}
This work was supported by JST ERATO Grant Number JPMJER1903 and JSPS KAKENHI Grant Number JP22K17853.

\renewcommand{\c}{\origc}
\newrefcontext[sorting=nyt]
\printbibliography[heading=bibintoc]
\renewcommand{\c}{\bm{c}}

\clearpage
\appendix
\onecolumn

{\noindent \LARGE \bf Appendix}

\section{Additional related work on learning through optimization}
\label{app:related-work}
Many researchers have addressed learning tasks whose input--output pipelines involve optimization steps~\citep{Amos2017-sf,Wilder2019-yw,Agrawal2019-sv,Berthet2020-wm,Tan2020-lp,Meng2021-yb,Amos2023-kw,Elmachtoub2022-og,Wang2020-wt,El_Balghiti2023-xe}.
While most of them seek to develop practical learning methods, several have studied generalization guarantees. 
\citet{Wang2020-wt} have studied a so-called \textit{decision-focused learning} method with \textit{reparametrization}, which is technically the same as projection. 
Their theoretical result focuses on learning of models that generate objective functions, assuming reparametrization matrices to be fixed.
By contrast, we obtain a generalization bound for learning projection matrices. 
\citet{El_Balghiti2023-xe} have studied generalization bounds in the so-called \textit{smart predict-then-optimize} setting. 
While they focus on learning of models that generate coefficients of objectives from contextual information as with \citet{Wang2020-wt}, our interest is in learning projection matrices, which affect both objectives and constraints. 
On the practical side, the line of work provides useful techniques for differentiating outcomes of optimization with respect to input parameters. 
Our gradient-based method for learning projection matrices is partly based on the result by \citet{Tan2020-lp}.

\section{Proof of the lower bound on $\pdim(\Ucal)$}\label{app:proof-lower-bound}
We establish the lower bound on $\pdim(\Ucal)$ in \cref{theorem:pdim-lower-bound} by constructing a set of $(n-2k)k$ LP instances that $\Ucal$ can shatter.
  The instances are written as $\pi_{\itt,\jtt} = (\c_\itt, \A, \b_\jtt) \in \R^n\times\R^{2k\times n}\times\R^{2k}$ for $\itt = 1,\dots,n-2k$ and $\jtt = 1,\dots,k$, where 
  \begin{align}
    \c_\itt = \begin{bmatrix}
      \e_\itt \\ \zeros_{2k}
    \end{bmatrix},
    &&
    \A = [\zeros_{2k, n-2k}, \I_{2k}],
    &&
    \text{and}
    &&
    \b_\jtt = \begin{bmatrix}
      \e_\jtt \\ \zeros_{k}
    \end{bmatrix}.  
  \end{align}
Here, $\e_r$ and $\e_s$ are the $r$th and $s$th standard basis vectors of $\R^{n-2k}$ and $\R^{k}$, respectively, and $\zeros_{a, b}$ is the $a\times b$ all zeros.
We consider a projection matrix of the form 
\[
  \P = 
  \begin{bmatrix}
    \Q \\ \I_k \\ -\I_k
  \end{bmatrix}, 
\]
where $\Q \in \set{0,1}^{(n-2k) \times k}$ is a binary matrix that we will use as tunable parameters to shatter the set of $(n-2k)k$ instances. 
Let $y_j$ denote the $j$th entry of the variable vector $\y\in\R^k$. 
Since we have 
\[
  \A\P = \begin{bmatrix}\I_k \\ -\I_k \end{bmatrix}
\]
the constraints, $\A\P\y \le \b_\jtt$, imply $y_j = 0$ for $j=1,\dots,k$ with $j\neq \jtt$ and $y_s \in [0, 1]$. 
Let $\y$ be such a feasible solution. 
Then, the objective value is written as $\c_\itt^\top \P\y = \e_\itt^\top\Q\y = Q_{r,s}y_s$, where $Q_{r,s}$ is the $(r,s)$ entry of $\Q \in \set{0,1}^{(n-2k) \times k}$. 
Since $Q_{r,s} \in \set{0,1}$ and $y_s \in [0, 1]$, we have $\max\Set{\c_\itt^\top \P \y}{\A\P\y \le \b_\jtt} = Q_{r,s}$. 
Thus, the set of those $(n-2k)k$ instances can be shattered by setting all threshold values to $1/2$ and appropriately choosing each entry of $\Q \in \set{0,1}^{(n-2k) \times k}$. 
In other words, all the $2^{(n-2k)k}$ outcomes of 
$\Set{u(\P, \pi_{\itt,\jtt}) = Q_{r,s} \ge 1/2}{\itt=1,\dots,n-2k,\, \jtt=1,\dots,k}$ 
can realize by changing $\P \in \R^{n\times k}$ (or $\Q \in \set{0,1}^{(n-2k) \times k}$).
Thus, we obtain an $\Omega(nk)$ lower bound on the pseudo-dimension of $\Ucal = \Set*{u(\P, \cdot):\Pi\to\R}{\P \in \R^{n\times k}}$.

\section{How to remove equality constraints}\label{app:remove-eq}
Suppose that we are given an LP of the form 
\begin{equation}
  \mathop{\text{maximize}}_{\z\in\R^n} \quad \w^\top \z
  \qquad  
  \mathop{\text{subject to}}\quad \A_{\text{ineq}}\z \le \b_{\text{ineq}},\ \A_{\text{eq}}\z = \b_{\text{eq}}, 
\end{equation}
which has both inequality and equality constraints. 
Below, assuming that a (trivially) feasible solution $\x_0$ is available (i.e., $\A_{\text{ineq}}\x_0 \le \b_{\text{ineq}}$ and $\A_{\text{eq}}\x_0 = \b_{\text{eq}}$), we transform the LP into an equivalent inequality form. 
First, we replace the variable vector $\z$ with $\z' + \x_0$, obtaining an equivalent LP of the form 
\begin{equation}
  \mathop{\text{maximize}}_{\z'\in\R^n} \quad \w^\top (\z' + \x_0)
  \qquad  
  \mathop{\text{subject to}}\quad \A_{\text{ineq}}\z' \le \b_{\text{ineq}} - \A_{\text{ineq}}\x_0,\ \A_{\text{eq}}\z' = \zeros. 
\end{equation}
The equality constraints, $\A_{\text{eq}}\z' = \zeros$, mean that $\z'$ must be in the null space of $\A_{\text{eq}}$. 
Therefore, $\z'$ is always represented as $\z' = (\I - \A_{\text{eq}}^\dagger \A_{\text{eq}})\x$ with some $\x \in \R^n$, where $\A_{\text{eq}}^\dagger$ is the pseudo-inverse of $\A_{\text{eq}}$ and $\I - \A_{\text{eq}}^\dagger \A_{\text{eq}}$ is the orthogonal projection matrix onto the null space of $\A_{\text{eq}}$.\footnote{We can also represent $\z'$ by a linear combination of a basis of the null space of $\A_{\text{eq}}$, which we can compute via SVD. However, we found that this representation was numerically unstable in our experiments.} 
Substituting $\z' = (\I - \A_{\text{eq}}^\dagger \A_{\text{eq}})\x$ into the above LP allows us to remove the equality constraints since they are automatically satisfied for every $\x$. 
After all, by transforming the variable vector as $\z = (\I - \A_{\text{eq}}^\dagger \A_{\text{eq}})\x + \x_0$, we can obtain an equivalent LP of the form 
\begin{equation}
  \mathop{\text{maximize}}_{\x\in\R^n} \quad \w^\top (\I - \A_{\text{eq}}^\dagger \A_{\text{eq}})\x
  \qquad  
  \mathop{\text{subject to}}\quad \A_{\text{ineq}}(\I - \A_{\text{eq}}^\dagger \A_{\text{eq}})\x \le \b_{\text{ineq}} - \A_{\text{ineq}}\x_0, 
\end{equation}
where the additive constant, $\w^\top \x_0$, in the objective is omitted.
This is an inequality-form LP \eqref{prob:lp} with $\c = (\I - \A_{\text{eq}}^\dagger \A_{\text{eq}})^\top \w$, $\A = \A_{\text{ineq}}(\I - \A_{\text{eq}}^\dagger \A_{\text{eq}})$, and $\b = \b_{\text{ineq}} - \A_{\text{ineq}}\x_0$. 
Note that $\x = \zeros$ is always feasible for the resulting LP and that we can use this transformation if $\A_{\text{eq}}$ and $\b_{\text{eq}}$ are fixed, even if $\w$, $\A_{\text{ineq}}$, and $\b_{\text{ineq}}$ can change across instances.

\section{Derivation of the gradient in SGA}\label{app:gradient}
We explain how to derive the gradient of $u(\P, \pi)$ in \eqref{eq:gradient}, which indeed follows from \citet[Theorem~1]{Tan2020-lp} (or the implicit function theorem).
To focus on computing the gradient, we suppose that the columns of $\P$ have already been projected onto the feasible region of $\pi$ by the projection step~\eqref{eq:projection}.

Consider computing the gradient $\nabla u(\P, \pi)$ of $u(\P, \pi)=\max\Set*{ \c^\top \P\y}{\A\P\y \le \b }$ with respect to $\P$. 
For convenience, we define new parameters $\bm{w} = \P^\top\c\in\R^k$ and $\bm{W}=\A\P\in\R^{m\times k}$ and let $\y^*\in\argmax\Set*{\bm{w}^\top \y}{\bm{W}\y \le \b }$. 
Then, we can differentiate the optimal value, $u(\P, \pi) = \bm{w}^\top\y^*$, with respect to $\bm{w}$ and $\bm{W}$ by applying the implicit function theorem to the KKT condition. 
Specifically, as shown in \citet[Theorem~1]{Tan2020-lp}, we have $\frac{\partial u}{\partial \bm{w}} = \y^*$ and $\frac{\partial u}{\partial \bm{W}} = -\bm{\lambda}^*{\y^*}^\top$, where $\bm{\lambda}^*\in\mathbb{R}^m_{\ge0}$ is the dual optimal solution. 
From the chain rule, we have 
\[
  \nabla u(\P, \pi) = \frac{\partial u}{\partial \bm{w}} \cdot \frac{\partial \bm{w}}{\partial \P} + \frac{\partial u}{\partial \bm{W}} \cdot \frac{\partial \bm{W}}{\partial \P},
\] 
where the indices for the products are aligned appropriately.
By substituting the derivatives into this, we obtain $\nabla u(\P, \pi) = \c{\y^*}^\top - \A^\top\bm{\lambda}^*{\y^*}^\top$. When applying the implicit function theorem, we must ensure that the Jacobian matrix is invertible. In the above case, ensuring the regularity condition (i.e., active constraints are linearly independent at $\y^*$) is sufficient.

\section{Running time of learning methods}\label{app:time-learning}

\begin{figure*}[t]
  \centering
  \includegraphics[width=1.0\linewidth]{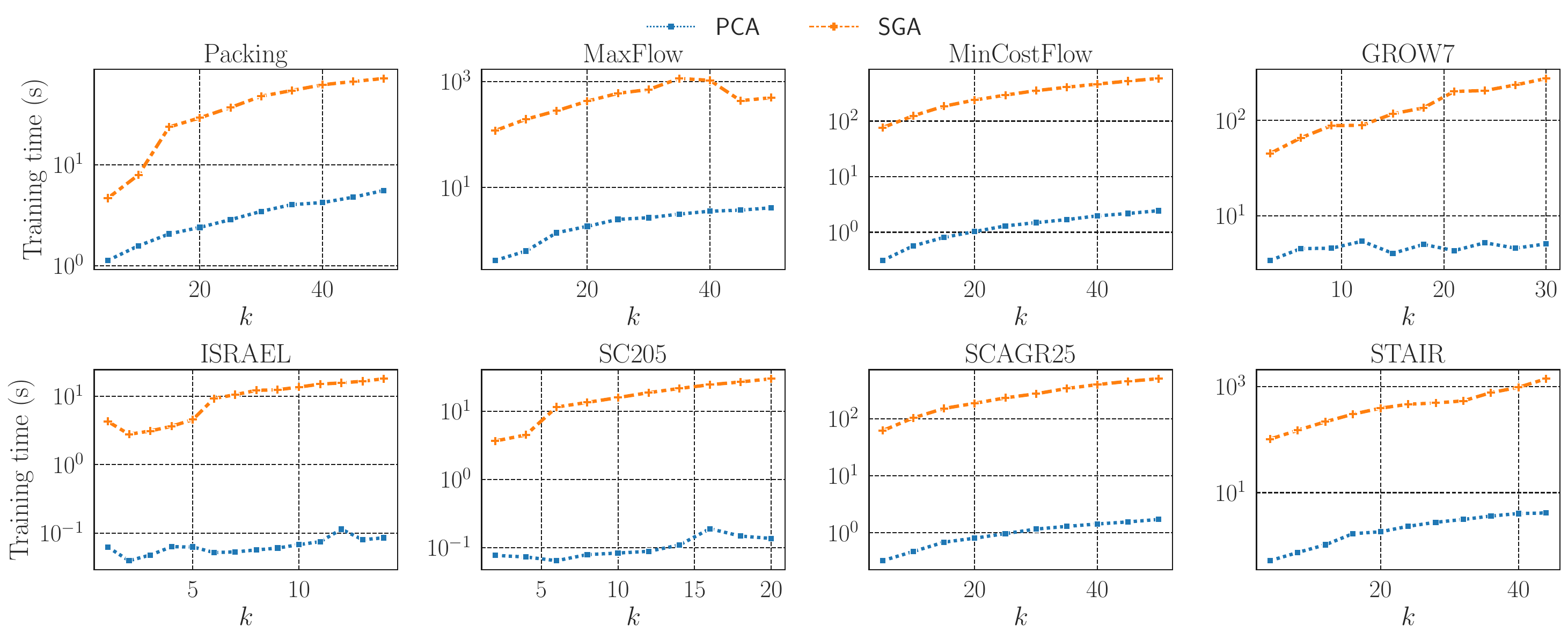}
  \caption{Running times of \PCA and \SGA for learning projection matrices on 200 training instances. 
  }
  \label{fig:training-time}
\end{figure*}

We discuss the theoretical complexity of the PCA- and SGA-based methods. 
For convenience, we use $T_\mathrm{lp}(m, n)$ to represent the time complexity of solving an LP instance with $m$ inequality constraints and $n$ variables, as this factor highly depends on problem settings.
Also, let $T_\mathrm{proj}(m)$ be the time for solving the problem in \eqref{eq:projection} $k$ times for projecting columns of $\P\in\R^{n\times k}$ onto the feasible region specified by $m$ inequality constraints. 
Given the $T_\mathrm{lp}(m, n)$-time linear optimization oracle, we can implement this projection step with a Frank--Wolfe-style algorithm. 
In this case, the time for projecting columns of $\P$ within an $\varepsilon$-error is typically $T_\mathrm{proj}(m) = T_\mathrm{lp}(m, n) \cdot {\rm poly}(n, m)\log(1/\varepsilon)$~\citep{Lacoste-Julien2015-ir,Garber2021-tg}.

\textbf{PCA-based method.}
Computing SVD of $\X \in \mathbb{R}^{N \times n}$ takes $\Ord(Nn^2)$ time. Then, the final projection takes up to $T_\mathrm{proj}(Nm)$ time. Thus, the total time complexity is $\Ord(Nn^2) + T_\mathrm{proj}(Nm)$.

\textbf{SGA-based method.} 
We discuss the complexity of a single iteration, which consists of projecting columns of $\P$ as in~\eqref{eq:projection}, solving a projected LP for obtaining $\y^*$ and $\bm{\lambda}^*$, and computing the gradient in~\eqref{eq:gradient}. 
These take $T_\mathrm{proj}(m)$, $T_\mathrm{lp}(m, k)$, and $\Ord(n(m + k))$ time, respectively. 
Thus, the per-iteration complexity is $T_\mathrm{proj}(m) + T_\mathrm{lp}(m, k) + \Ord(n(m + k))$. 
After finishing all the iterations, the final projection takes $T_\mathrm{proj}(Nm)$ time, as with the PCA-based method.
In the experiments, we ran \SGA for a single epoch, i.e., $N$ iterations. 
Thus, the total time complexity is $N(T_\mathrm{proj}(m) + T_\mathrm{lp}(m, k) + \Ord(n(m + k))) + T_\mathrm{proj}(Nm)$.

We turn to the running times of the PCA- and SGA-based methods in the experiments in \cref{section:experiment}.
\Cref{fig:training-time} shows the times taken by \PCA and \SGA for learning projection matrices on training datasets of 200 instances.
(\Full and \ColRand are not included since they do not learn projection matrices.)
We assumed that optimal solutions of training instances were computed a priori, and hence the time for solving original LPs was not included.
The figure shows that \SGA took much longer than \PCA. 
This is natural since \SGA iteratively solves LPs for computing gradients \eqref{eq:gradient} and quadratic programs for projection \eqref{eq:projection}, while \PCA only requires computing the top-($k-1$) right-singular vectors of $\X - \ones_N \bar\x^\top$, as discussed above.

\begin{figure*}[tb]
  \centering
  \includegraphics[width=.95\linewidth]{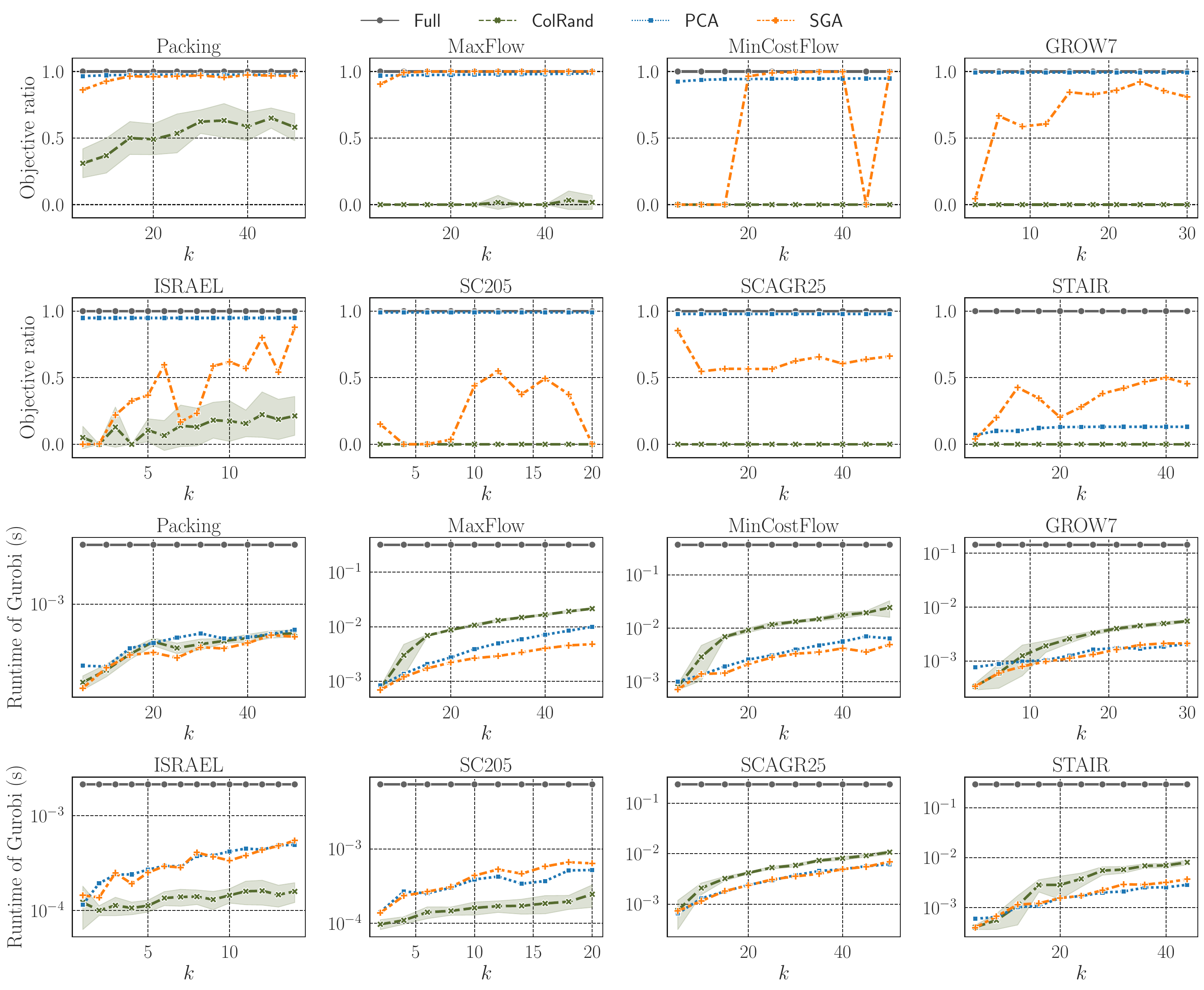}
  \caption{
    The same plots as in \cref{fig:results} but with \SGA initialized with \ColRand instead of \PCA, as mentioned in \cref{footnote:rand-init}.
    Compared with \cref{fig:results}, the objective ratio of \SGA deteriorates particularly in MinCostFlow, GROW7, SC205, and SCAGR25.
    }
  \label{fig:results_rand_init}
\end{figure*}

\end{document}